\documentclass[11pt, lefttitle]{behroozarxiv}

\usepackage[T1]{fontenc}
\usepackage[utf8]{inputenc}

\usepackage{enumitem}
 
\usepackage{amsmath,amssymb,amsfonts,mathtools}

\usepackage{amsthm}

\newtheorem{remark}{Remark}
\newtheorem{proposition}{Proposition}

\title{Principles Do Not Apply Themselves: A Hermeneutic Perspective on AI Alignment}

\DeclareRobustCommand{\orcidicon}{%
  \href{https://orcid.org/0000-0001-9568-4166}{%
    \raisebox{-0.2ex}{\includegraphics[height=1.6ex]{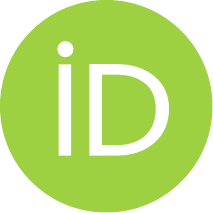}}%
  }%
}

\author{Behrooz Razeghi~\orcidicon}
\affiliation{School of Engineering and Applied Sciences, Harvard University}

\abstract{
AI alignment is often framed as the task of ensuring that an AI system follows a set of stated principles or human preferences, but general principles rarely determine their own application in concrete cases. When principles conflict, when they are too broad to settle a situation, or when the relevant facts are unclear, an additional act of judgment is required. This paper analyzes that step through the lens of hermeneutics and argues that alignment therefore includes an interpretive component: it involves context-sensitive judgments about how principles should be read, applied, and prioritized in practice. We connect this claim to recent empirical findings showing that a substantial portion of preference-labeling data falls into cases of principle conflict or indifference, where the principle set does not uniquely determine a decision. We then draw an operational consequence: because such judgments are expressed in behavior, many alignment-relevant choices appear only in the distribution of responses a model generates at deployment time. To formalize this point, we distinguish deployment-induced and corpus-induced evaluation and show that off-policy audits can fail to capture alignment-relevant failures when the two response distributions differ.
We argue that principle-specified alignment includes a context-dependent interpretive component.
}

\keywords{Hermeneutics, AI Alignment, Interpretation.}
\preprint{arXiv preprint}
\date{\today}

\begin{document}


\maketitle


\section{Introduction}
\label{sec:intro}

Much work on AI alignment begins from a familiar picture. One specifies a set of high-level aims---for example, that a system should be helpful, honest, and harmless---and then trains the system, often on preference data and through reinforcement learning, to produce outputs that conform to them \citep{ouyang2022training, bai2022training, bai2022constitutional, buyl2025discretion}. That picture captures part of the problem, but it leaves open how those principles are to be applied in concrete cases. Stating a principle is not the same as determining how it applies in a particular case. A principle may be clearly stated and still leave open which features of the situation matter, what it requires in the case at hand, and how it should be weighed against competing considerations. When a case is ambiguous, when several principles point in different directions, or when the relevance of the facts is itself disputed, the principle set does not by itself yield a unique answer \citep{sorensen2024value, conitzer2024social}. This paper studies that gap through the lens of hermeneutics \citep{caputo2024alignment}. In the limited sense relevant here, hermeneutics concerns how general expressions, including rules and principles, acquire determinate force in particular cases \citep{gadamer2013truth}.

This difficulty already appears in current alignment practice. Buyl~\textit{et al.}~\citep{buyl2025discretion} show that when alignment data are built from pairwise preference judgments, many cases are not settled by the stated principles alone. Some are \textit{consensus} cases, in which the principles support the same judgment. But many others are cases of \textit{conflict}, in which different principles support different judgments, or \textit{indifference}, in which the principle set does not determine a unique preference. In those cases, the final label cannot be derived from the principle set itself. A further judgment must therefore enter somewhere: in annotator instructions, tie-breaking rules, aggregation procedures, or later stages of the pipeline. The point is not only that labels may outrun principles. It is also that some alignment-relevant judgments may be introduced outside the principle list itself. A dataset may therefore look principle-aligned at the level of its final labels even though some of the judgments that fixed those labels were made elsewhere and cannot be recovered from the labels alone. Related work likewise suggests that alignment data cannot be treated as the transparent expression of a single, settled value order. Recent studies emphasize the plural, subjective, and multicultural character of human feedback, and show that whose feedback is collected can materially affect what alignment data represent \citep{sorensen2024value, kirk2024prism, conitzer2024social}.

\vspace{4pt}

Once this point is taken seriously, it has an immediate implication for evaluation. If principle application depends on context-sensitive judgment, then many alignment-relevant choices are expressed not only in a final preference label, but in behavior itself. Faced with an underspecified request, a system may ask a clarifying question, proceed with a tentative answer, refuse to respond, or provide a constrained alternative. Faced with competing principles, it may privilege one while suppressing another. These are not merely stylistic differences. They are part of how the governing principles are applied in practice. For language models in particular, such choices are realized through the model's conditional distribution over responses given a prompt.

\vspace{4pt}

This observation leads to a basic distinction between two evaluation regimes. At deployment, behavior is produced by the model's own distribution over responses. By contrast, many benchmarks and preference datasets evaluate the model on fixed candidate responses drawn from a corpus that was constructed independently of the model under test. These two regimes need not coincide. An off-policy evaluation based on static candidates can therefore fail to capture errors that arise only when the model generates its own continuations at deployment time. We formalize this point by comparing deployment-induced and corpus-induced evaluation, and by stating a simple total-variation bound on the resulting risk gap. The purpose of this formalization is modest: it identifies a principled sense in which static preference audits can provide only a surrogate view of alignment when the relevant behavior is generated on-policy. The formal result does not attempt to model interpretation itself; it identifies a distributional limitation of static audits when alignment-relevant choices are expressed in generated behavior.

\vspace{4pt}

While the interpretive character of alignment is visible even at the level of single outputs, it becomes especially clear in multi-turn dialogue \citep{klassen2024pluralistic, kirk2024prism}. Successive turns are not independent of one another. The interpretation of a later input is shaped by earlier outputs, local resolutions of conflict at one turn can constrain the options available at later turns, and a principle that seemed straightforward in one part of the exchange may need to be reconsidered once the practical stakes of applying it become clearer. If multi-turn alignment is studied only as a sequence of isolated prompt--response pairs, these cross-turn dependencies are difficult to describe and harder still to audit.

\begin{figure}[t!]
    \centering
    \includegraphics[width=\linewidth]{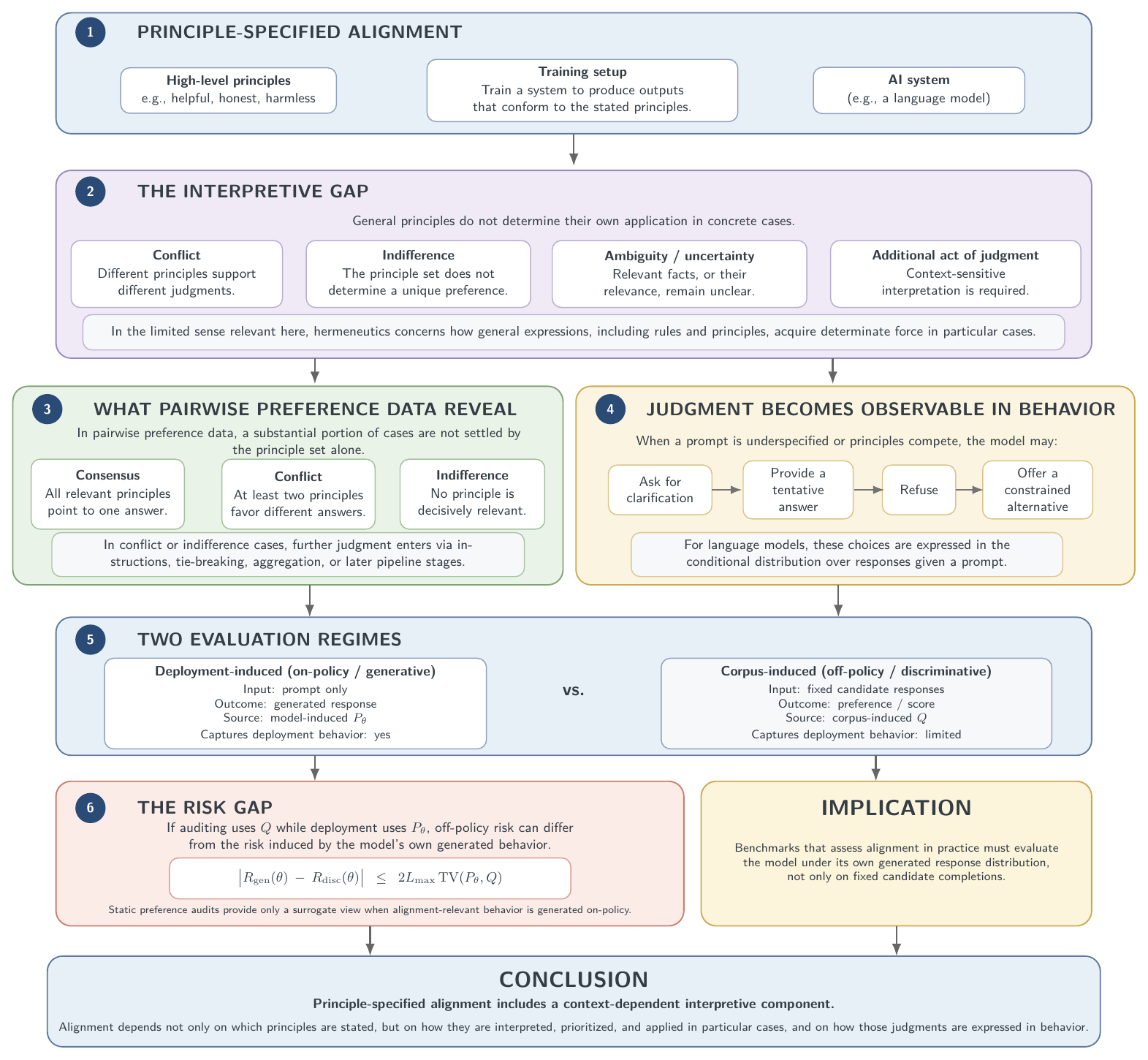}
    \caption{Conceptual structure of principle-specified alignment and its evaluative consequences.}
    \label{fig:conceptual_overview}
\end{figure}

\paragraph{``Understanding'' as a Hermeneutical Event.}

From the beginnings of biblical and classical exegesis, hermeneutics has dealt with the question of how we come to \textit{understand} a text, situation, or work of art. Initially, the question of understanding was treated as a methodological issue, concerned with recovering the author's intention and thereby arriving at a stable interpretation. But as hermeneutics progressed, it increasingly became a question of the historical and dialogical character of interpretation. We never approach a text \textit{tabula rasa} (as a blank slate). Rather, we bring our own preconceptions and our cultural and linguistic contexts with us \citep{heidegger2002time, gadamer2013truth}, and these shape what we are able to hear, what questions we are able to ask, and what we are able to learn. Understanding a text is therefore not just a matter of extracting information. Rather, it is a \textit{transformative event} in which both the subject matter and the reader's \textit{horizon of understanding} are changed. It is precisely this relation, known as the \textit{hermeneutic circle}\footnote{%
The expression \textit{hermeneutische Cirkel} appears in the classical philologist August Boeckh's 1809 lectures on philological method; Schleiermacher later develops the part--whole circularity of understanding without adopting the expression ``hermeneutical circle.'' See \citep{boeckh1886enzyklopadie} and \citep{grondin2015hermeneutical}.}), that lies at the center of the hermeneutic account of understanding. An interpretation of a \textit{part} of a whole can only be made by reference to some prior idea of what the \textit{whole} must be. But at the same time, the interpretation of the whole in light of the part must lead to a revision of our original understanding of that whole, and conversely. The depth of our understanding therefore depends on the historically conditioned openness of the dialogue in which we engage with the text and with it as a whole. It is in this engagement with the material that our preconceptions are challenged and that we come to see things in a new light. Understanding is therefore an ongoing dialogue in which we continually test our interpretations against tradition, context, and the material itself, and through this process come to see things more clearly.

\paragraph{Principle Application Requires Interpretation.}

A general principle may be clearly stated and yet still fail, on its own, to determine how it should be applied in a particular case. It may leave open which features of the case are relevant, how broadly it should be interpreted under the circumstances, and how it should be weighed against other applicable principles. In such cases, the move from principle to judgment is not automatic.

The question now arises as to whether and how a hermeneutic approach can shed light on this matter. In the hermeneutic tradition, application is not seen as the formal, mechanical application of a principle whose content has already been securely established. Rather, understanding always begins from pre-understanding and within a horizon that is itself already constituted \citep{heidegger2002time, gadamer2013truth}. What a principle demands is determined only in relation to the particular case. For that reason, application is not a formal procedure carried out on a text or principle in a context-independent manner once its content has been established. It is context-dependent and can remain indeterminate even when the principle itself is determinate. It is therefore possible to adhere to the same principle while reaching different reasonable judgments in different cases, because its scope, weight, or relation to other principles may remain open.

If alignment principles are intended to serve as general normative guidelines, they must be interpretable in sufficient detail to yield determinate judgments in specific cases. This may be necessary because principles can conflict with one another, because they may be incomplete or imprecise, or because they do not suffice to determine which of several permitted responses should be selected. The point of this paper is not to undermine formal approaches to alignment, but to note that the transition from abstract principles to particular judgments requires interpretation in one form or another.

Figure~\ref{fig:conceptual_overview} summarizes the conceptual structure of the paper, from the interpretive underdetermination of principle-specified alignment to its behavioral and evaluative consequences.

\paragraph{Contributions.}
The contribution of the paper is accordingly twofold. First, it gives a philosophically precise account of why principle-specified alignment includes an interpretive stage rather than reducing entirely to the mechanical execution of a fixed rule set. Second, it draws an operational consequence from that account by showing why evaluation must attend to the data-generating distribution under which alignment is assessed, and by formalizing the gap between deployment-induced and corpus-induced audit regimes.

\paragraph{Paper Organization.}
Section~\ref{sec:alignment-as-interpretation} applies the preceding claim about the interpretation of general principles to contemporary alignment systems and discusses principle conflict, indifference, and discretion. Section~\ref{sec:behavior-evaluation} then connects this interpretive claim to observable model behavior, and Section~\ref{ssec:on-off-policy} formalizes the distinction between on-policy and off-policy evaluation and derives a bound on their discrepancy.

\section{Alignment as an Interpretive Process}
\label{sec:alignment-as-interpretation}

Alignment is formally described through a range of algorithmic pipelines, from language-model training to post hoc fine-tuning through Reinforcement Learning from Human Feedback (RLHF) \citep{christiano2017deep, ouyang2022training, bai2022training, bai2022constitutional, buyl2025discretion}. Yet recent work shows that substantial discretion remains at different stages of these pipelines. In particular, annotators---whether human or model-based---must decide whether one candidate response is better than another. That decision is not always fixed by the stated principle set itself, because the relevant principles may conflict or may fail to provide sufficient guidance in a particular case. It is difficult to formulate principles such as safety and helpfulness in a way that is both broadly applicable and fully decisive in every instance. Hermeneutics draws attention to the same difficulty: when a rule or norm is general, its application to a concrete case can still remain open.

The fact that the outcome is reduced to a single better/worse label means that we often cannot see whether, or how, one principle was balanced against others that may also have been relevant. A system may therefore be described as aligned with a principle set while, in practice, some principles are overridden, underweighted, or left operationally unspecified.

From a hermeneutic perspective, providing the system with a set of rules such as ``avoid harm'' and ``be helpful'' is not sufficient to determine their application in every case. Instead, alignment is worked out on a case-by-case basis, as higher-level principles are applied in light of the user's query, broader socio-historical context, and the discretionary choices made in particular cases. From a Gadamerian perspective, no principle is interpreted in isolation. Instead, it is interpreted against a shifting horizon of understanding that may change over the course of a conversation.

\subsection{Principle Conflicts and Alignment Discretion}
\label{ssec:conflict}

In \cite{buyl2025discretion}, the authors distinguish three categories of principle agreement in alignment data: \textit{consensus}, in which all relevant principles point unambiguously to one answer; \textit{conflict}, in which at least two principles disagree on which answer is preferred; and \textit{indifference}, in which no principle is decisively relevant. When either \textit{conflict} or \textit{indifference} arises, the aligner, whether a human or a model, must \textit{exercise discretion} by deciding which principle to override, or whether none of the listed principles really matters. In the hermeneutical sense, the user or system \textit{fuses horizons} in a situation not fully determined by the context itself and must choose which dimension of meaning is to be taken as pertinent.

The crucial observation of Buyl~\textit{et~al.} is that a large portion of the data used for training alignment models consists of examples in the conflict or indifference categories---more than one might intuitively assume. They also show that disagreement between annotators exists even in cases where there is supposed to be a clear principle consensus; they refer to this as ``\textit{discretion arbitrariness}''. From a hermeneutic perspective, we would more likely describe such cases as instances of \textit{misreading} or \textit{ignoring} the text that constrains the intended meaning of the alignment principles. In this way, we see that there can be a significant difference between the principles that are written down and the principles that are actually enforced in practice.

\subsection{Annotator Judgment as Hermeneutic Event} 

Far from being a mechanistic procedure, each preference decision in alignment training can be viewed as a miniature \textit{hermeneutic event}. The annotator (or AI-based annotator) must interpret the user's query and context, map it onto a set of broad alignment principles that may be ambiguous, vague, or in conflict, resolve any such conflicts by deciding which principle takes priority in the case at hand, and finally render judgment by selecting one candidate answer over the other. As in Gadamerian or legal hermeneutics, part of the alignment challenge is that principles like ``avoid harm'' or ``respect free speech'' do not specify in advance how they should be traded off in borderline scenarios. The difficulty must therefore be handled through ongoing interpretive judgment.

Buyl \textit{et al.} \citep{buyl2025discretion} also examine \textit{principle supremacy}, that is, how often one principle is favored over another in pairwise conflicts. Their analysis shows that the practical priority of principles can differ across datasets and annotators. In cases where annotators repeatedly favor ``be helpful'' over a competing principle, this suggests that helpfulness is being treated as higher priority in practice, even if that ordering is not made explicit in the official list of principles \citep{huang2024collective}.

\subsection{The Illusion of Static Alignment}

One might think that listing a set of alignment rules in a ``Constitutional AI'' system \citep{bai2022constitutional} or in a company's ``AI Usage Guidelines'' would be enough to determine the normative stance of the system. But from a hermeneutic perspective, this interpretive gap cannot be eliminated by enumeration alone. It is difficult to specify rules in advance in a way that fully closes this gap, because the system must still determine which principle takes precedence when there is a conflict, or what to do when none of the listed principles seems clearly applicable. Even if fundamental rights and values are explicitly listed, there is no guarantee that they will be given the same priority in practice across all cases.

So an \textit{illusion of alignment} may arise: the AI is trained on data that appears to embody all the relevant principles; however, the fact that conflicts or principle-indifference frequently arise means that \textit{annotators} hold the real decision-making power. Buyl \textit{et al.} call this effect ``alignment-washing''. What they discuss is related to a well-known warning in hermeneutics: if interpreters are able to make discretionary choices behind the veil of yes/no answers, then it becomes impossible to tell whether the system really embodies the ``principles'' it is said to implement.

\subsection{Toward Hermeneutic Transparency and Accountability}

Hermeneutics has long insisted that interpretation must remain open to review. In legal systems, the exercise of discretion is acceptable only so long as it is structured, reviewable, and justified (as evidenced by judicial reasoning in court decisions). Buyl \textit{et al.} then propose metrics---\textit{discretion arbitrariness}, \textit{principle supremacy}, and \textit{discretion discrepancy}---through which it can be seen when, how, and how consistently an AI system or its human trainers apply the same interpretive framework to principle-laden judgments.
Such transparency matters for two reasons:\vspace{-2pt}
\begin{enumerate}[leftmargin=*] 
    \item[-] 
    \textit{Interpretive Trust:} If end-users or the broader public cannot see how conflicts in alignment rules were resolved, they cannot meaningfully trust that the system is respecting the stated principles.\vspace{-2pt} 
    \item[-] 
    \textit{Feedback and Iteration:} By making discretionary choices measurable, developers can refine or correct them—identifying if certain fundamental values (like ``avoid racism'') are frequently being overridden by less critical concerns (e.g. be ``obedient''). 
\end{enumerate} 

In hermeneutic terms, a \textit{self-reflexive} approach to alignment sees these metrics as akin to textual commentaries or legal precedents, \textit{marking the shifts} in how the system weighs and fuses different horizons. A hermeneutic approach to alignment demands:\vspace{-2pt}
\begin{itemize}[leftmargin=*]
    \item[-]
    \textit{Context-awareness:} 
    The ability to refine or ``\textit{instantiate}'' general principles into more specific sub-principles as contexts vary.\vspace{-2pt}
    \item[-]
    \textit{Iterative dialogue:} 
    Alignment must allow for multi-turn interactions, so that newly revealed ambiguities or conflicts can be reconsidered rather than dismissed.\vspace{-2pt}
    \item[-]
    \textit{Procedural logs or rationale:} 
    As in a judge's opinion, an AI system (or its human trainers) should keep track of \textit{why} it chose one principle override over another, enabling external review and improvement.
\end{itemize}

\subsection{Beyond Fixed Rule-Sets: Alignment as Ongoing Dialogue} 

A hermeneutic perspective is quite helpful in showing that alignment cannot be a one-time event of rule injection from on high. As Buyl \textit{et al.}'s empirical findings illustrate, principle conflicts can easily add up and disagreements among annotators can arise even over apparently clear-cut textual consensus. This shows that alignment is a \textit{continuing conversation}, between the user and the system on the one hand, and among the various stakeholders, implementers, and human annotators of the system on the other.
This point also connects with recent work on pluralistic alignment. Sorensen \textit{et al.} emphasize that human values, rights, and duties may stand in tension rather than forming a single unified order \citep{sorensen2024value}. Kirk \textit{et al.} show that alignment judgments vary across participants and cultural settings, so that the composition of the feedback population matters to what an alignment dataset represents \citep{kirk2024prism}. Klassen \textit{et al.} further argue that pluralistic alignment must often be understood over time rather than at a single decision point \citep{klassen2024pluralistic}. Conitzer \textit{et al.} similarly frame the problem as one of aggregating diverse human feedback rather than assuming that a single preference order is already given \citep{conitzer2024social}. 
Our claim is compatible with these developments, but different in emphasis: we focus on the interpretive step by which principles are applied in particular cases, and on the fact that this step becomes behaviorally visible only under the model's own generated trajectories.

Whereas Gadamer's hermeneutics is often taken to imply that texts do not interpret themselves, we argue that alignment rules do not implement themselves either. The interpretive circle of context, questions, and countervailing values that continuously feed back into each other must be \textit{acknowledged} and \textit{structured}, rather than \textit{concealed}. We take hermeneutics to suggest that it is by dealing with the ambiguities or contestabilities of meaning that understanding is achieved. In the case of AI alignment, the task is not to eliminate these ambiguities or contestabilities, but to give them some structure, so that discretionary judgments are traceable, consistent, and open to critique.

\section{From Hermeneutic Discretion to Observable Behavior}
\label{sec:behavior-evaluation}
 
The hermeneutic claim in Sec.~\ref{ssec:conflict} is not only interpretive in a literary sense.
This has also an ``\textit{observational}'' consequence for the practice of alignment auditing. The underdetermination of action by principles in cases of conflict, indifference, or borderline cases means that the aligner must make an additional decision (about which consideration is to be treated as decisive, about requesting clarification, about refusing, and about the wording of the response). These decisions are not hidden mental states of the model; they manifest as the model's ``\textit{behavior}'': specifically, they are the choice of a conditional distribution over responses, or equivalently, the choice of a kernel $x\mapsto \pi_{\boldsymbol\theta}(\cdot\mid x)$ that governs which continuations are more or less likely to occur.

This observation matters because an audit can only measure what its data-generating process exposes.
Paired-preference corpora provide a prompt $x$ together with a fixed set of candidate completions $(y_0,y_1)$ and a label, but they do not sample the model's free-generation distribution $\pi_{\boldsymbol\theta}(\cdot\mid x)$. Consequently, they cannot, in general, identify deployment-time quantities that depend on the model's own continuation law, including (i) the frequency with which the model chooses clarification over direct answering, (ii) the distribution of principle-tradeoffs manifested in its completions, and (iii) any multi-turn phenomena that depend on the model's own outputs.

The preceding discussion raises an evaluation question: \textit{under which data-generating process is alignment being assessed}?
A static paired-preference corpus evaluates a principle-laden loss on a distribution of candidate responses that is fixed by dataset construction,
whereas deployment-time behavior is generated by the model's own conditional law of continuations. To make this distinction precise, we now introduce a minimal probabilistic model of prompt--response generation.

\subsection{On-Policy vs. Off-Policy Alignment Evaluation}
\label{ssec:on-off-policy}

If alignment-relevant judgments are expressed in behavior, then evaluation depends on the distribution that generates the behavior being assessed. This is the operational point behind the preceding discussion. A model may choose to address an underspecified prompt by asking for clarification, by providing a direct answer, by refusing to answer or by providing a constrained alternative. These are not only style choices. Rather, they represent the practical realization of the underlying formalism. For language models, such choices are expressed through the model's conditional distribution over responses given a prompt.

Let $\rho$ denote a distribution over prompts $x \in \mathcal{X}$, and let $\pi_{\boldsymbol{\theta}}(\cdot \mid x)$ denote the response distribution of the model being evaluated. The corresponding deployment-time, or \textit{on-policy}, joint distribution is
\begin{equation}
P_{\boldsymbol{\theta}}(x,y) \coloneqq \rho(x)\,\pi_{\boldsymbol{\theta}}(y \mid x).
\label{eq:on_policy_joint_main}
\end{equation}
By contrast, many alignment benchmarks and preference datasets evaluate models on candidate responses drawn from a fixed corpus construction process that does not depend on the current model. If $q(\cdot \mid x)$ denotes that fixed response kernel, the corresponding \textit{off-policy} joint distribution is
\begin{equation}
Q(x,y) \coloneqq \rho(x)\,q(y \mid x).
\label{eq:off_policy_joint_main}
\end{equation}
The two distributions share the same prompt marginal $\rho$ but may differ substantially in their response behavior.

To compare these regimes, let $\ell : \mathcal{X}\times\mathcal{Y} \to [0,L_{\max}]$ be any bounded misalignment loss. We define the on-policy (generative) and off-policy (discriminative) risks by
\begin{align}
\mathcal{R}_{\mathrm{gen}}(\boldsymbol{\theta})
&\coloneqq
\mathbb{E}_{(x,y)\sim P_{\boldsymbol{\theta}}}\bigl[\ell(x,y)\bigr],
\label{eq:rgen_main}
\\
\mathcal{R}_{\mathrm{disc}}(\boldsymbol{\theta})
&\coloneqq
\mathbb{E}_{(x,y)\sim Q}\bigl[\ell(x,y)\bigr].
\label{eq:rdisc_main}
\end{align}
The first quantity measures the risk induced by the model's own deployment behavior. The second evaluates the same loss on a fixed behavioral distribution supplied by the dataset.

The discrepancy between the two regimes is controlled by total variation distance. In particular,
\begin{equation}
\bigl|
\mathcal{R}_{\mathrm{gen}}(\boldsymbol{\theta})
- \mathcal{R}_{\mathrm{disc}}(\boldsymbol{\theta})
\bigr| \;\le\; 2L_{\max}\,\mathrm{TV}\!\bigl(P_{\boldsymbol{\theta}},Q\bigr),
\label{eq:tv_gap_main}
\end{equation}
where $\mathrm{TV}(P,Q)\coloneqq \tfrac12\int |\mathrm{d}P-\mathrm{d}Q|$.
A proof, together with a stronger worst-case characterization, is given in Appendix~\ref{appx:sec:on-off-policy}.

Equation~\eqref{eq:tv_gap_main} presents a fundamental limitation of static preference audits. It states that if the evaluation response distribution does not match the model's continuation distribution, then off-policy evaluation can differ from deployment-time risk. If a response distribution is used for auditing, and that distribution is not the same as the model's continuation distribution, then the audit is not measuring the same quantity as the model's deployment-time risk. There is a profound difference between auditing a model's behavior on fixed candidate completions and auditing the same model's behavior under free generation. The latter requires the model to generate completions from scratch, thereby instantiating the relevant principles in context, and it is these completions and their corresponding risk that are relevant in deployment.

This is especially important for the interpretive phenomena we have just discussed. Whether a model asks a clarifying question, declines to answer, softens a response, or prioritizes one principle over another is visible only in the trajectories generated by the model itself. Such phenomena therefore cannot, in general, be identified from a static off-policy corpus alone. This is not simply a pedantic observation about the corpus-induced and deployment-induced distinction. It highlights a deep difference between two notions of evaluation that are often treated as if they were the same. On the one hand, we have the evaluation of a fixed set of candidate outputs. On the other hand, we have the evaluation of the model's own way of applying principles in context.

Table~\ref{tab:on-vs-off-policy} summarizes the operational contrast between the two regimes. Off-policy evaluation observes model behavior only through a fixed response set drawn from a corpus-induced distribution, whereas on-policy evaluation measures behavior under the model's own deployment-induced response distribution.

\begin{table*}[t]
  \centering
  \caption{Off-policy versus on-policy alignment evaluation.}
  \label{tab:on-vs-off-policy}
  \begin{tabular}{@{}lcc@{}}
    \toprule
    & \textbf{Off-policy} (discriminative) & \textbf{On-policy} (generative) \\
    \midrule
    Evaluation input & fixed candidate responses & prompt only \\
    Observable outcome & preference / score & generated response \\
    Response source & corpus-induced $Q$ & model-induced $P_{\boldsymbol{\theta}}$ \\
    Typical data & paired preferences & roll-outs \\
    Captures deployment behavior & limited & yes \\
    \bottomrule
  \end{tabular}
\end{table*}

\section{Conclusion}

The paper has argued against understanding AI alignment as merely the application of a fixed set of principles, because general principles do not determine their own application in particular cases. It has done so by introducing a set of conceptual distinctions that make the act of judgment visible, and by using hermeneutics to show why this judgment should not be treated as unconstrained discretion. We have seen that recent findings of conflict, indifference, and discretion in alignment judgments are not isolated exceptions, but reflect a basic feature of a principle-based alignment framework: many decisions are not fixed by the stated principle set alone. We have also seen that evaluation cannot be limited to static preference labels alone. We conclude that AI alignment involves more than stating the right set of principles. An adequate account of AI alignment must address which principles are stipulated, how they are interpreted in the particular situation at hand, and how the judgments involved in that interpretation are expressed in the model's behavior over time. Future research can then examine in more detail how to make context-dependent judgment in alignment more visible, and thus more open to analysis and evaluation.

\subsubsection*{Acknowledgments}
This work was supported by the Swiss National Science Foundation (SNSF) under Grant No.~222339.

\bibliographystyle{tmlr}
\bibliography{references}

\appendix
\clearpage
\section{On-Policy and Off-Policy Evaluation}
\label{appx:sec:on-off-policy}

A possible question for alignment auditing is \emph{which behavioral distribution}
generates the text being evaluated: the model's own deployment behavior, or a
fixed dataset of candidate responses. This appendix states the formal result underlying the discussion in
Sec.~\ref{ssec:on-off-policy}.

Let $(\mathcal{X},\Sigma_{\mathcal{X}})$ be a measurable space of prompts and $(\mathcal{Y},\Sigma_{\mathcal{Y}})$ a measurable space of responses.
A stochastic language model (policy) with parameters $\boldsymbol{\theta}$ is a conditional distribution $\pi_{\boldsymbol{\theta}} \colon \mathcal{X} \times \mathcal{Y} \to [0,1]$ maps $(x,y)\mapsto \Pr_{\pi_{\boldsymbol{\theta}}}[y\mid x]$, written $y \sim \pi_{\boldsymbol{\theta}}( \cdot \!\mid\! x)$. 
Let $\rho$ be the (external) distribution\footnote{Empirically, $\rho$ is induced either by (i) drawing prompts from the live production feed, or (ii) randomly sampling prompts from an offline corpus.} of user prompts on $\mathcal{X}$. 
During inference the \textit{on-policy} joint distribution actually encountered by end-users is 
%
\begin{equation}
P_{\boldsymbol\theta}(\mathrm dx,\mathrm dy) \coloneqq \rho(\mathrm dx)\,\pi_{\boldsymbol\theta}(\mathrm dy\mid x),
%
\label{eq:joints_common_marginal_onpolicy_app}
\end{equation}
%
A roll-out therefore draws $(x,y) \sim P_{\boldsymbol{\theta}}$.
Most alignment benchmarks, in contrast, supply responses from a fixed distribution that does \emph{not} depend on~$\boldsymbol{\theta}$.

%
Fix any conditional kernel $q( \cdot \!\mid\! x)$ on $\mathcal{Y}$ and define the \textit{off-policy} joint law as 
\begin{equation}
Q(\mathrm dx,\mathrm dy) \coloneqq \rho(\mathrm dx)\,q(\mathrm dy\mid x).
\label{eq:joints_common_marginal_offpolicy_app}
\end{equation}
Hence $P_{\boldsymbol{\theta}}$ and $Q$ share the same prompt marginal~$\rho$ but may differ in their answer distributions. Classic paired-preference datasets such as \textsc{HH-RLHF} \citep{bai2022training_hh} or \textsc{PKU-SafeRLHF} \citep{ji2024pku} instantiate $Q$\footnote{Note that $Q$ is a fixed \textit{behavioral distribution} on $\mathcal{X} \times \mathcal{Y}$.} by keeping each prompt $x$ fixed and substituting two human-annotated responses $(y_{0},y_{1})$.

\begin{remark}[Equal–prompt–marginal assumption]\label{rmk:equal-marginal}
We assume the off-policy joint $Q(x,y)=\rho(x)\,q(y\mid x)$ preserves the \textit{same} prompt marginal as the on-policy joint $P_{\boldsymbol\theta}$: $\int_{\mathcal{Y}}\!P_{\boldsymbol\theta}(x,y)\, \mathrm{d}y \;=\; \int_{\mathcal{Y}}\!Q(x,y)\, \mathrm{d}y \;=\; \rho(x), \forall x\in\mathcal{X}$. The condition is automatically satisfied by paired-preference corpora such as HH-RLHF and PKU-SafeRLHF, which keep each prompt $x$ fixed and modify only the conditional answer distribution via human rewrites or model variants. Under this equality, the risk gap simplifies to the per-prompt difference in Prop.~\ref{prop:blind-spot-gap}.
\end{remark}

\paragraph{Generative vs. discriminative alignment risk.}
Let $\ell:\mathcal{X}\times\mathcal{Y}\!\to\! \mathbb{R}$ be a bounded mis-alignment loss (toxicity, principle violation, \textit{etc.}). We distinguish
\begin{subequations}
\begin{align}
  \mathcal{R}_{\text{gen}}^{\ell}(\boldsymbol{\theta})
  &\coloneqq \mathbb{E}_{(x,y)\sim P_{\boldsymbol{\theta}}}\bigl[\ell(x,y)\bigr],
    &\text{(on-policy risk)},
    \label{eq:r_gen}\\[2pt]
  \mathcal{R}_{\text{disc}}^{\ell}(\boldsymbol{\theta})
  &\coloneqq \mathbb{E}_{(x,y)\sim Q}\bigl[\ell(x,y)\bigr],
    &\text{(off-policy risk).}
    \label{eq:r_disc}
\end{align}
\end{subequations}
%
Indeed, Eq.~\ref{eq:r_gen} captures the \textit{generative} risk that end-users encounter, whereas Eq.~\ref{eq:r_disc} evaluates the same loss on a fixed \emph{behavioral distribution} $Q(x,y) = \rho(x)\,q(y \!\mid\! x)$, where $y\sim q( \cdot \!\mid\! x)$ is drawn off-policy rather than from $\pi_{\boldsymbol{\theta}}$.

\paragraph{Total-variation bound.}
Because $0 \le \ell \le L_{\max}$, a standard variational argument yields
\begin{equation}
\bigl\lvert\mathcal{R}_{\text{gen}}(\boldsymbol{\theta})
      -\mathcal{R}_{\text{disc}}(\boldsymbol{\theta})\bigr\rvert
\;\le\;
2L_{\max}\; \mathrm{TV}\!\bigl(P_{\boldsymbol{\theta}},Q\bigr),
\label{eq:tv-gap}
\end{equation}
where $\mathrm{TV}(P,Q) \coloneqq \tfrac12\int |\,\mathrm{d}P-\mathrm{d}Q\,|$. Unless $Q$ \textit{matches} the model's own footprint $P_{\boldsymbol{\theta}}$, Eq.~\ref{eq:tv-gap} shows that off-policy tests give only a loose surrogate guarantee.


\begin{proposition}[Blind-spot gap and its worst-case characterization]
\label{prop:blind-spot-gap}

Let $(\mathcal X,\Sigma_{\mathcal X})$ and $(\mathcal Y,\Sigma_{\mathcal Y})$ be measurable spaces and let $\rho$ be a probability measure on $\mathcal X$. Let $\pi_{\boldsymbol\theta}(\cdot\mid x)$ and $q(\cdot\mid x)$ be Markov kernels on $\mathcal Y$ given $x\in\mathcal X$.
Let $\ell:\mathcal X\times\mathcal Y\to\mathbb R$ be measurable and bounded, and define 
\begin{equation}
\Delta_\ell(\boldsymbol\theta)\coloneqq
\bigl|\mathcal R_{\mathrm{gen}}^{\ell}(\boldsymbol\theta)-\mathcal R_{\mathrm{disc}}^{\ell}(\boldsymbol\theta)\bigr|.
\end{equation}

\smallskip
\noindent\textbf{(i) Fixed-loss decomposition.}
For every bounded~$\ell$,
\begin{equation}
\label{eq:blind-gap}
\Delta_\ell(\boldsymbol\theta)
=
\Bigl|
\mathbb E_{x\sim\rho}
\Bigl[
\mathbb E_{y\sim\pi_{\boldsymbol\theta}(\cdot\mid x)}[\ell(x,y)]
-
\mathbb E_{y\sim q(\cdot\mid x)}[\ell(x,y)]
\Bigr]
\Bigr|.
\end{equation}
In particular, if $\pi_{\boldsymbol\theta}(\cdot\mid x)=q(\cdot\mid x)$ for $\rho$-almost every $x$, then $\Delta_\ell(\boldsymbol\theta)=0$
(for that fixed~$\ell$).

\smallskip
\noindent\textbf{(ii) Total-variation bound.}
If $0\le \ell(x,y)\le L_{\max}$ for all $(x,y)$, then
\begin{equation}
\Delta_\ell(\boldsymbol\theta)
\le
2L_{\max}\,\mathrm{TV}\!\bigl(P_{\boldsymbol\theta},Q\bigr),
\label{eq:tv_gap_app}
\end{equation}
where
\begin{equation}
\mathrm{TV}(\mu,\nu)
\coloneqq
\sup_{A\in\Sigma} |\mu(A)-\nu(A)|
=
\frac12\int |\mathrm d\mu-\mathrm d\nu|.
\label{eq:tv_def_app}
\end{equation}

\smallskip
\noindent\textbf{(iii) Worst-case characterization.}
Fix $L_{\max}>0$ and define the function class
\[
\mathcal L_\infty(L_{\max})
\coloneqq
\{\ell:\mathcal X\times\mathcal Y\to\mathbb R\ \text{measurable}:\ \|\ell\|_\infty\le L_{\max}\}.
\]
Define the \emph{worst-case} blind-spot gap
\begin{subequations}
\begin{align}
\label{eq:delta_infty_def_app}
\Delta_\infty(\boldsymbol\theta)
& \coloneqq
\sup_{\ell\in\mathcal L_\infty(L_{\max})}
\bigl|\mathcal R_{\mathrm{gen}}^{\ell}(\boldsymbol\theta)-\mathcal R_{\mathrm{disc}}^{\ell}(\boldsymbol\theta)\bigr| \\
& = 2L_{\max}\,\mathrm{TV}\!\bigl(P_{\boldsymbol\theta},Q\bigr).
\end{align}
\end{subequations}
Consequently,
\begin{equation}
\label{eq:iff_joint_app}
\Delta_\infty(\boldsymbol\theta)=0
\quad\Longleftrightarrow\quad
P_{\boldsymbol\theta}=Q.
\end{equation}

\smallskip
\noindent\textbf{(iv) Conditional total-variation decomposition under domination.}
Assume moreover that there exists a $\sigma$-finite measure $\lambda_{\mathcal Y}$ on $(\mathcal Y,\Sigma_{\mathcal Y})$
such that, for $\rho$-almost every $x$, both kernels are absolutely continuous w.r.t.\ $\lambda_{\mathcal Y}$, 
%
with jointly measurable densities $p_{\boldsymbol\theta}(\cdot\mid x)$ and $r(\cdot\mid x)$:
\begin{subequations}
\begin{align}
\pi_{\boldsymbol\theta}(\mathrm dy\mid x) & =p_{\boldsymbol\theta}(y\mid x)\,\lambda_{\mathcal Y}(\mathrm dy),\\ 
q(\mathrm dy\mid x) & = r(y\mid x)\,\lambda_{\mathcal Y}(\mathrm dy).    
\end{align}
\end{subequations}
Then
\begin{subequations}
\begin{align}
\label{eq:tv_conditional}
\mathrm{TV}\!\bigl(P_{\boldsymbol\theta},Q\bigr)
& =
\mathbb E_{x\sim\rho}\!\Bigl[\mathrm{TV}\!\bigl(\pi_{\boldsymbol\theta}(\cdot\mid x),\,q(\cdot\mid x)\bigr)\Bigr]\\
& =
\frac12\,\mathbb E_{x\sim\rho}\!\Bigl[\int_{\mathcal Y}\!\bigl|p_{\boldsymbol\theta}(y\mid x)-r(y\mid x)\bigr|
\,\lambda_{\mathcal Y}(\mathrm dy)\Bigr].
\end{align}
\end{subequations}
In particular,
\begin{equation}
\label{eq:iff_kernels}
\Delta_\infty(\boldsymbol\theta) \!= \! 0
\;\; \Longleftrightarrow\;\;
\pi_{\boldsymbol\theta}(\cdot \!\mid x)=q(\cdot \!\mid x)\ \text{for }\rho\text{-almost every\ }x.
\end{equation}

\end{proposition}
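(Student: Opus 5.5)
Each part reduces to a standard measure-theoretic fact, so I will prove them in order. For (i), I will write $P_{\boldsymbol\theta}=\rho\otimes\pi_{\boldsymbol\theta}$ and $Q=\rho\otimes q$ as products of a probability measure and a Markov kernel. Since $\ell$ is bounded and measurable, the generalized Fubini--Tonelli theorem for kernels (Ionescu-Tulcea) gives
\[
\mathbb E_{P_{\boldsymbol\theta}}[\ell]=\int_{\mathcal X}\!\int_{\mathcal Y}\ell(x,y)\,\pi_{\boldsymbol\theta}(\mathrm dy\mid x)\,\rho(\mathrm dx),
\]
and the same identity holds for $Q$. Here $x\mapsto\int\ell(x,y)\,\pi_{\boldsymbol\theta}(\mathrm dy\mid x)$ is measurable and bounded, so both expectations are finite. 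Subtracting under the common $\rho$-integral then yields \eqref{eq:blind-gap}. If the kernels agree $\rho$-a.e., the integrand vanishes $\rho$-a.e., so $\Delta_\ell=0$.

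For (ii) and (iii), I will work with the signed measure $\mu=P_{\boldsymbol\theta}-Q$ and its Jordan/Hahn decomposition $\mu=\mu^+-\mu^-$ with Hahn set $A$. Since $\mu(\mathcal X\times\mathcal Y)=0$, we have $\mu^+(A)=\mu^-(A^c)=\mathrm{TV}$ and $|\mu|(\cdot)=2\,\mathrm{TV}$. This also proves the equality of the two expressions in \eqref{eq:tv_def_app}, and I will check that along the way. The upper bound for any $\|\ell\|_\infty\le L_{\max}$ is then immediate:
\[
\Bigl|\int\ell\,\mathrm d\mu\Bigr|\le L_{\max}|\mu|(\mathcal X\times\mathcal Y)=2L_{\max}\mathrm{TV}.
\]
This bound covers (ii) as a special case. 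For the matching lower bound in (iii), I will take $\ell^\star=L_{\max}(\mathbf 1_A-\mathbf 1_{A^c})$, which is measurable with sup-norm $L_{\max}$ and gives $\int\ell^\star\,\mathrm d\mu=L_{\max}(\mu^+(A)+\mu^-(A^c))=2L_{\max}\mathrm{TV}$. So the supremum is attained and equals $2L_{\max}\mathrm{TV}$. The equivalence \eqref{eq:iff_joint_app} follows because TV is a metric: it is zero iff $\mu(B)=0$ for all $B$.

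One point needs care here. Part (ii) restricts to $0\le\ell\le L_{\max}$, and for that class the sharp constant is actually $L_{\max}\mathrm{TV}$, attained by $L_{\max}\mathbf 1_A$. The stated factor $2$ is therefore a valid but loose bound. In (iii) the class is symmetric, $\|\ell\|_\infty\le L_{\max}$, and that is where the factor $2$ becomes exact. I will make this distinction explicit so the two parts stay consistent.

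For (iv), I will first show that $P_{\boldsymbol\theta}$ and $Q$ are both absolutely continuous with respect to $\rho\otimes\lambda_{\mathcal Y}$, with joint densities $p_{\boldsymbol\theta}(y\mid x)$ and $r(y\mid x)$. Joint measurability of the densities is exactly what makes these valid Radon--Nikodym derivatives; I will check this on rectangles and extend by the $\pi$--$\lambda$ theorem, modifying on a $\rho$-null set of $x$ where domination may fail. Then
\[
\mathrm{TV}(P_{\boldsymbol\theta},Q)=\tfrac12\int|p_{\boldsymbol\theta}-r|\,\mathrm d(\rho\otimes\lambda_{\mathcal Y}),
\]
and Tonelli, which applies because the integrand is nonnegative and $\lambda_{\mathcal Y}$ is $\sigma$-finite, splits this into $\mathbb E_\rho[\tfrac12\int|p_{\boldsymbol\theta}(y\mid x)-r(y\mid x)|\,\lambda_{\mathcal Y}(\mathrm dy)]$. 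The inner quantity is $\mathrm{TV}(\pi_{\boldsymbol\theta}(\cdot\mid x),q(\cdot\mid x))$ by the density formula for TV. Finally, \eqref{eq:iff_kernels} follows: the expectation of a nonnegative function vanishes iff the function is zero $\rho$-a.e., and per-$x$ TV zero means the kernels agree. Combined with (iii), this gives the claimed equivalence.

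I expect the main obstacle to be the measure-theoretic bookkeeping in (iv): verifying that the joint density is correct and handling the exceptional $\rho$-null set. Everything else is a standard variational argument.
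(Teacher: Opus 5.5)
Your proposal is correct and follows the paper's proof essentially step for step. It uses disintegration of $P_{\boldsymbol\theta}$ and $Q$ for (i), the variational characterization $\sup_{\|f\|_\infty\le 1}|\mathbb E_{P_{\boldsymbol\theta}}f-\mathbb E_Q f|=2\,\mathrm{TV}$ for (ii)--(iii), and the joint density with respect to $\rho\otimes\lambda_{\mathcal Y}$ plus Tonelli for (iv); the only difference is that you derive the variational characterization from the Hahn decomposition and the extremal loss $L_{\max}(\mathbf 1_A-\mathbf 1_{A^c})$, whereas the paper cites it. Your side remark is also right: the paper gets the factor $2$ in (ii) by placing $\ell/L_{\max}$ in the symmetric unit ball, whereas for $0\le\ell\le L_{\max}$ the sharp constant is $L_{\max}\,\mathrm{TV}$, and the factor $2$ becomes exact only for the symmetric class in (iii).
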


\begin{proof}
\leavevmode

\noindent
\textbf{(i).}
Because $\ell$ is bounded, it is integrable with respect to both $P_{\boldsymbol\theta}$ and $Q$. Using the disintegrations in \eqref{eq:joints_common_marginal_onpolicy_app} and \eqref{eq:joints_common_marginal_offpolicy_app}, and Tonelli's theorem,
\[
\mathcal R_{\mathrm{gen}}^{\ell}(\boldsymbol\theta)
=
\int_{\mathcal X}\rho(\mathrm dx)
\int_{\mathcal Y}\ell(x,y)\,\pi_{\boldsymbol\theta}(\mathrm dy\mid x),
\]
and
\[
\mathcal R_{\mathrm{disc}}^{\ell}(\boldsymbol\theta)
=
\int_{\mathcal X}\rho(\mathrm dx)
\int_{\mathcal Y}\ell(x,y)\,q(\mathrm dy\mid x).
\]
Subtracting these two expressions yields and taking absolute values gives \eqref{eq:blind-gap}.
If $\pi_{\boldsymbol\theta}(\cdot\mid x)=q(\cdot\mid x)$ for $\rho$-a.e.\ $x$, the inner difference vanishes $\rho$-a.e.,
hence $\Delta_\ell(\boldsymbol\theta)=0$.

\smallskip
\noindent\textbf{(ii).}
Let $f:\mathcal X\times\mathcal Y\to\mathbb R$ be measurable with $\|f\|_\infty\le 1$.
By the standard variational characterization of total variation,
\begin{equation}
\label{eq:tv_variational_app}
\sup_{\|f\|_\infty\le 1}\bigl|\mathbb E_{P_{\boldsymbol\theta}}[f]-\mathbb E_{Q}[f]\bigr|
=
2\,\mathrm{TV}(P_{\boldsymbol\theta},Q).
\end{equation}
Now let $0\le \ell\le L_{\max}$ and define $f=\ell/L_{\max}$. Then $\|f\|_\infty\le 1$, so
\begin{subequations}
\begin{align}
\bigl|
\mathbb E_{P_{\boldsymbol\theta}}[\ell]
-
\mathbb E_Q[\ell]
\bigr|
& =
L_{\max}
\bigl|
\mathbb E_{P_{\boldsymbol\theta}}[f]
-
\mathbb E_Q[f]
\bigr| \\
& \le
2L_{\max}\,\mathrm{TV}(P_{\boldsymbol\theta},Q),
\end{align}
\end{subequations}
which is \eqref{eq:tv_gap_app}.

\smallskip
\noindent
\textbf{(iii).}
Taking the supremum of the left-hand side of \eqref{eq:tv_gap_app} over all $\ell\in\mathcal L_\infty(L_{\max})$ and applying \eqref{eq:tv_variational_app} yields
\begin{subequations}
\begin{align}
\Delta_\infty(\boldsymbol\theta)
& =
L_{\max}
\sup_{\|f\|_\infty\le 1}
\bigl|
\mathbb E_{P_{\boldsymbol\theta}}[f]
-
\mathbb E_Q[f]
\bigr|\\ 
& =
2L_{\max}\,\mathrm{TV}(P_{\boldsymbol\theta},Q),
\end{align}
\end{subequations}
which proves \eqref{eq:delta_infty_def_app}. Since total variation vanishes if and only if the two probability measures are equal, \eqref{eq:iff_joint_app} follows.

\smallskip
\noindent
\textbf{(iv).}
Under the domination assumption, both $P_{\boldsymbol\theta}$ and $Q$ are absolutely continuous with respect to
the product measure $\rho\otimes\lambda_{\mathcal Y}$ on $\mathcal X\times\mathcal Y$, with densities
$p_{\boldsymbol\theta}(y\mid x)$ and $r(y\mid x)$, respectively:
\begin{subequations}
\begin{align}
P_{\boldsymbol\theta}(\mathrm dx,\mathrm dy) &= p_{\boldsymbol\theta}(y\mid x)\,(\rho\otimes\lambda_{\mathcal Y})(\mathrm dx,\mathrm dy), \\
Q(\mathrm dx,\mathrm dy) &= r(y\mid x)\,(\rho\otimes\lambda_{\mathcal Y})(\mathrm dx,\mathrm dy).
\end{align}   
\end{subequations}
Using the density form $\mathrm{TV}(\mu,\nu)=\tfrac12\int |\mathrm d\mu-\mathrm d\nu|$ under absolute continuity,
\begin{align}
\mathrm{TV}(P_{\boldsymbol\theta},Q)
&=
\frac12\int_{\mathcal X\times\mathcal Y} \!\! \! \bigl|p_{\boldsymbol\theta}(y\!\mid\! x)-r(y\!\mid\! x)\bigr|\,
(\rho\otimes\lambda_{\mathcal Y})(\mathrm dx,\mathrm dy) \nonumber \\
&=
\frac12\int_{\mathcal X}\rho(\mathrm dx)\int_{\mathcal Y}\bigl|p_{\boldsymbol\theta}(y\mid x)-r(y\mid x)\bigr|\,
\lambda_{\mathcal Y}(\mathrm dy) \nonumber \\
&=
\mathbb E_{x\sim\rho}\Bigl[\mathrm{TV}\bigl(\pi_{\boldsymbol\theta}(\cdot \! \mid x),q(\cdot \!\mid x)\bigr)\Bigr],
\end{align}
which is~\eqref{eq:tv_conditional}. Finally, $\Delta_\infty(\boldsymbol\theta)=0$ iff $\mathrm{TV}(P_{\boldsymbol\theta},Q)=0$, which holds iff the integrand in~\eqref{eq:tv_conditional} is zero $\rho$-a.e.,
i.e., iff $\mathrm{TV}(\pi_{\boldsymbol\theta}(\cdot \!\mid x),q(\cdot \!\mid x))=0$ for $\rho$-a.e.\ $x$, equivalently $\pi_{\boldsymbol\theta}(\cdot \!\mid x)=q(\cdot \!\mid x)$ for $\rho$-a.e.\ $x$. This proves~\eqref{eq:iff_kernels}.
\end{proof}

By Proposition~\ref{prop:blind-spot-gap}, we show a basic limitation of purely off-policy audits, i.e., for any fixed misalignment loss $\ell$, the off-policy risk $\mathcal R_{\mathrm{disc}}^{\ell}(\boldsymbol\theta)$ evaluates $\ell$ under the behavioral joint $Q=\rho\otimes q$, whereas deployment behavior is governed by the on-policy joint $P_{\boldsymbol\theta}=\rho\otimes\pi_{\boldsymbol\theta}$. Unless $Q$ matches the model-induced conditional law $\pi_{\boldsymbol\theta}(\cdot \!\mid x)$ for $\rho$-a.e.\ prompt, off-policy measurements can differ from the true on-policy risk by an amount controlled (in the worst case) by $\mathrm{TV}(P_{\boldsymbol\theta},Q)$. Consequently, audits that only score fixed candidate completions $(y_0, y_1)$ for each prompt $x$ can miss failure modes that arise under free generation, where the model must select a continuation distribution and thereby instantiate the principles in context.

\end{document}